\documentclass[letterpaper]{article} 
\usepackage{aaai21}  
\usepackage{times}  
\usepackage{helvet} 
\usepackage{courier}  
\usepackage[hyphens]{url}  
\usepackage{graphicx} 
\urlstyle{rm} 
\usepackage{natbib}  
\usepackage{caption} 
\frenchspacing  
\setlength{\pdfpagewidth}{8.5in}  
\setlength{\pdfpageheight}{11in}  
\usepackage{algorithm}
\usepackage{algorithmic}
\usepackage{appendix}
\usepackage{bbding}
\usepackage{diagbox}
\usepackage{graphicx}
\usepackage{epstopdf}
\usepackage{float}
\usepackage{subfigure}
\usepackage{multicol}
\usepackage{multirow}
\usepackage{wrapfig}
\usepackage{url}
\usepackage{indentfirst}
\usepackage{amssymb}
\usepackage{amsmath}
\usepackage{delarray}
\usepackage{tabularx}
\usepackage{mathrsfs}
\usepackage{amsmath}
\usepackage{amsthm}
\usepackage{bm}
\usepackage{booktabs}
\usepackage{hyperref}
\usepackage{pifont}
\usepackage{tikz}
\usepackage{textcomp}
\newtheorem{theorem}{Theorem}
\usepackage[T1]{fontenc} 
\usepackage{fancyvrb,cprotect}

\author{
	Sitong Mao \textsuperscript{\rm 1}, 
	Keli Zhang \textsuperscript{\rm 2}, 
	Fu-lai Chung \textsuperscript{\rm 1} \\
}
\title{ Mixed Set Domain Adaptation }
\affiliations{
	\textsuperscript{\rm 1} The Hong Kong Polytechnic University \\
	\textsuperscript{\rm 2} Huawei Noah's Ark Lab \\
	sitong.mao@connect.polyu.hk, zhangkeli1@huawei.com, korris.chung@polyu.edu.hk
}

\begin{document}
	\maketitle
	\begin{abstract}
		In the settings of conventional domain adaptation, categories of the source dataset are from the same domain (or domains for multi-source domain adaptation), which is not always true in reality. In this paper, we propose \textbf{\textit{Mixed Set Domain Adaptation} (MSDA)}. Under the settings of MSDA, different categories of the source dataset are not all collected from the same domain(s). For instance, category $1\sim k$ are collected from domain $\alpha$ while category $k+1\sim c$ are collected from domain $\beta$. Under such situation, domain adaptation performance will be further influenced because of the distribution discrepancy inside the source data. A feature element-wise weighting (FEW) method that can reduce distribution discrepancy between different categories is also proposed for MSDA. Experimental results and quality analysis show the significance of solving MSDA problem and the effectiveness of the proposed method.
	\end{abstract}
	
	\section{Introduction}
	Domain adaptation focuses on adapting a model to the target data with the help of source data which are from different but related domains \cite{farseev2017cross} \cite{liu2008evigan} \cite{mcclosky2006reranking} \cite{daume2009frustratingly} \cite{saenko2010adapting} \cite{long2015learning} \cite{long2017deep} \cite{gong2012geodesic} \cite{weston2012deep} \cite{pan2011domain} \cite{tzeng2014deep} \cite{yu2018multi} \cite{gholami2020unsupervised}. The distribution discrepancy between source data and target data, which is referred to as domain shift, will influence the adaptation performance. In typical settings of domain adaptation, categories of the source dataset are collected from the same domain (or the same set of domains for multi-source domain adaptation \cite{sun2011two} \cite{sun2015survey} \cite{peng2019moment}). However, such settings are limited in the reality where not all categories are always collected from the same domain. For example, pictures of class $1$ of the source dataset are taken on sunny days while class $2$ are taken on cloudy days. Then when a model is trained to classify these images, it may focus on the weather (i.e., domain) information besides their content information, which creates obstacle of extracting effective features during domain adaptation process. To address this problem, we propose Mixed Set Domain Adaptation (MSDA), under the settings of which categories of source dataset are from several (e.g., two in this paper) different domains as illustrated in Figure \ref{fig:msda}.
	
	\begin{figure}
		\includegraphics[width=\linewidth]{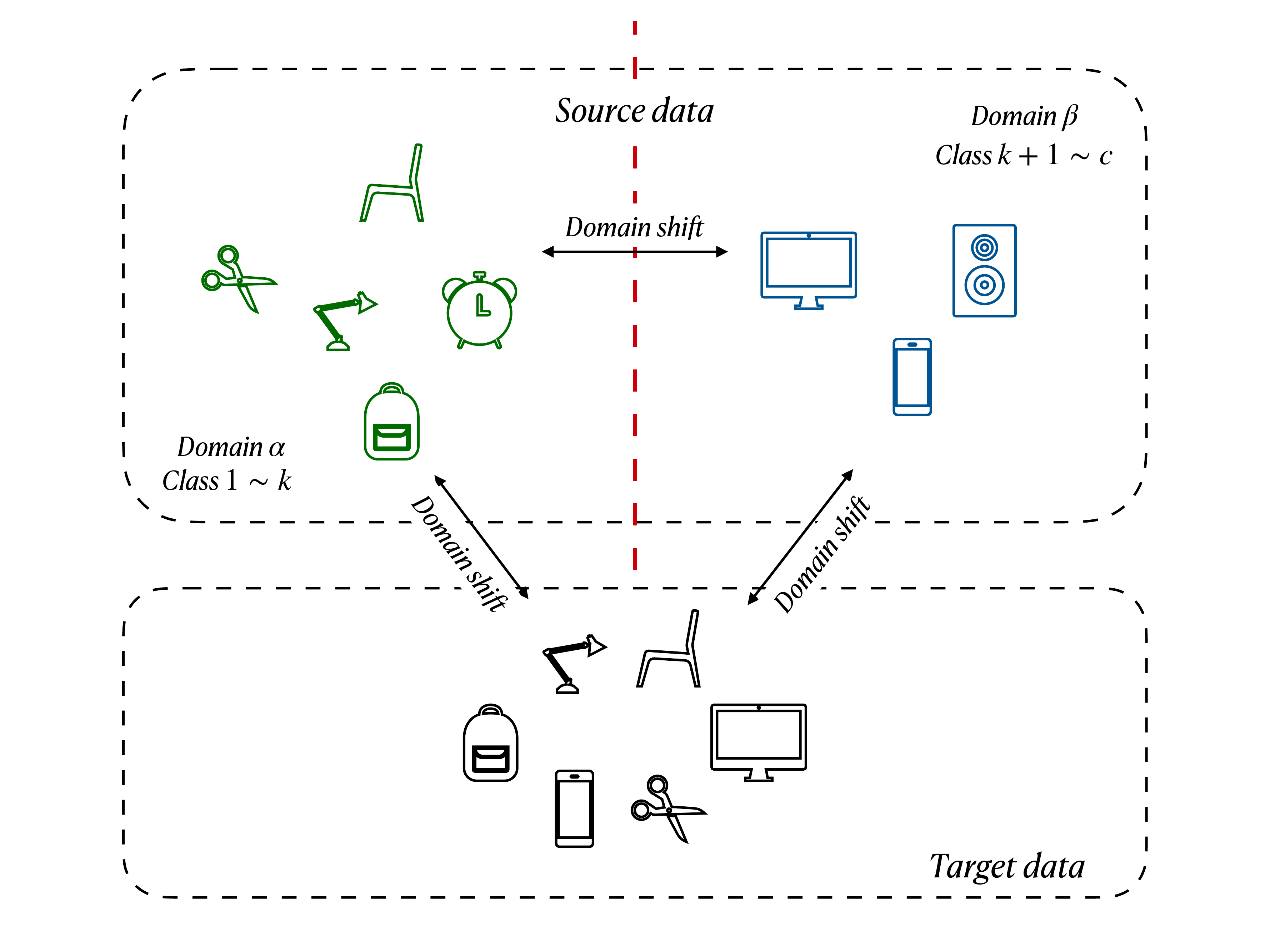}
		\caption{Illustration of Mixed Set Domain Adaptation.}
		\label{fig:msda}
	\end{figure}
	
	The objective of previous domain adaptation methods is to reduce the distribution discrepancy between the source data and the target data. Combining with deep networks \cite{yosinski2014transferable}, some methods utilize statistical metrics to bound the distribution discrepancy between source features and target features, such as ``MMD'' \cite{long2015learning} and ``JMMD'' \cite{long2017deep}. Recently, adversarial learning has achieved remarkable performance at mapping data to similar distributions \cite{goodfellow2014generative} \cite{tzeng2017adversarial}. Deep networks with adversarial learning architecture embedded generally has two components: feature extractor and discriminator. The discriminator aims to distinguish which domain a deep feature is from while the feature extractor tries to confuse the discriminator \cite{ganin2016domain} \cite{long2018conditional}. Then an optimum will be reached when the source features and the target features follow the same distribution.
	
	Different from the settings of conventional domain adaptation, not all categories of source data are from the same doamin(s) in mixed set domain adaptation (MSDA) problem. Hence, we need to address the \textbf{\textit{domain shift between different categories}} in source dataset besides bridging target data and source data. The influence of the domain shift between different categories of source dataset will be further discussed in ``Problem Setting'' section and ``Experiments'' section. The main challenge of MSDA is how to reduce the distribution discrepancy caused by domain shift instead of that caused by the difference of contents between different categories. Methods that reduce conditional distribution discrepancy tend to map data which belong to the same category close. Therefore, marginal distribution discrepancy is considered in this paper when reducing domain shift between different categories in MSDA problems. However, if marginal distribution discrepancy is minimized directly, we can not make sure which parts of the features are mapped close while we want to focus on those that can represent domain instead of content. In this paper, we propose a ``feature element-wise weighting'' (FEW) method to assign different weights to different elements of the feature vector during the adversarial training process according to how important they are for predicting labels, which can then address the domain shift between different categories. The main contributions of this paper are: 
	\begin{itemize}
		\item[i)] A new setting of domain adaptation is proposed: Mixed Set Domain Adaptation (MSDA). MSDA does not require different categories of the source dataset to be from the same domain(s), which is more common in reality than typical domain adaptation.
		\item[ii)] A feature element-wise weighting method is proposed and devised to address domain shift between different categories, which outperforms traditional domain adaptation methods when dealing with MSDA problems.
		\item[iii)] Various experiments were conducted to evaluate the significance of MSDA and the effectiveness of the proposed method is demonstrated.
	\end{itemize}
	
	
	\section{Related Work}
	Previous methods for typical deep domain adaptation have achieved remarkable performance. Some of them utilize statistical metrics to measure the distribution discrepancy between source domain and target domain, such as ``MMD'' \cite{long2015learning} which limits the marginal distribution discrepancy, ``JMMD'' \cite{long2017deep} which bounds the conditional distribution discrepancy, and ``MJKD''  \cite{mao2018deep} which processes data in different categories separately. Recently, by using adversarial learning architecture, the distance between source distribution and target distribution can be further reduced and significant improvements have been achieved. In \cite{ganin2016domain}, deep features are sent to the discriminator to minimize the marginal distribution discrepancy without additional label information. To further improve the adaptation performance, the tensor products of deep features and the $softmax$ probabilities distributed over each class predicted by the deep neural networks are sent to the discriminator so that the conditional distribution discrepancy can be reduced \cite{long2018conditional} \cite{zhang2019domain} \cite{wang2019transferable} \cite{zhang2019bridging}.
	
	\begin{figure*}[t]
		\centering
		\includegraphics[width = 0.8\linewidth]{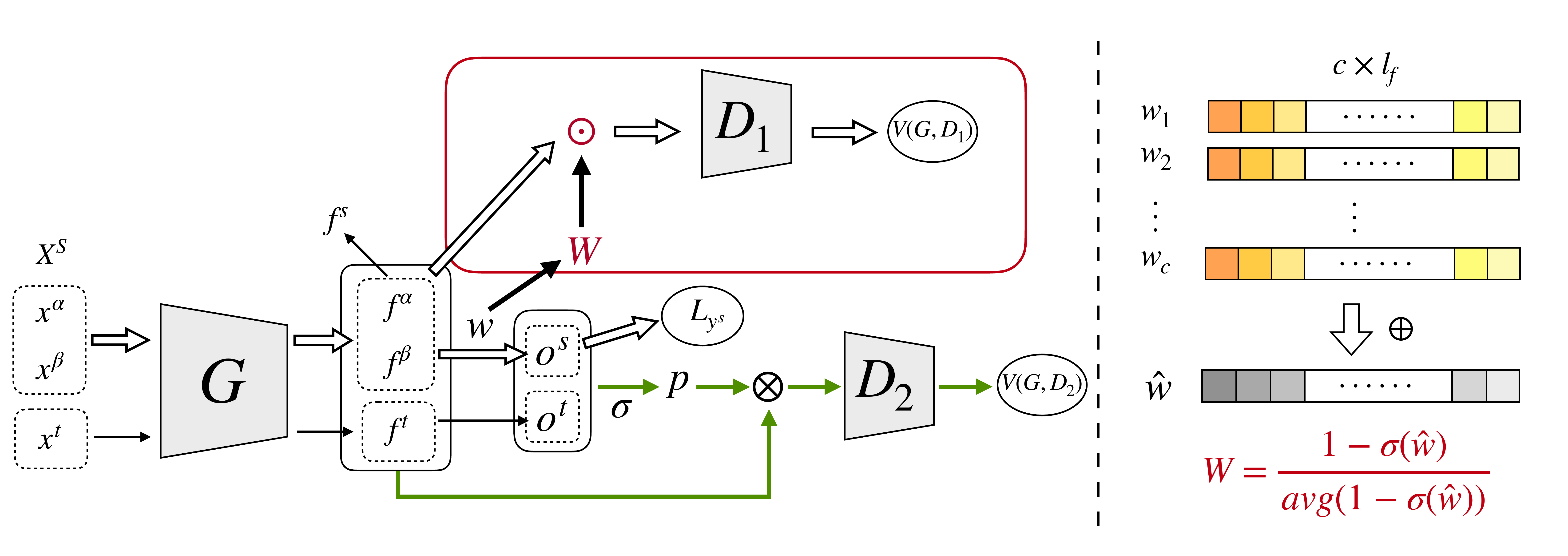}
		\caption{Illustration of the proposed method. The left panel is the proposed architecture. The right panel is the way of computing $W$ by $w$.}\label{fig:msda_method}
	\end{figure*}
	
	Different from the settings of conventional domain adaptation, the mixed set domain adaptation to be proposed focuses on domain adaptation problem where different categories in source dataset are from several (e.g., two in this paper) domain(s). For instance, category $1\sim k$ are from domain $\alpha$ while category $k+1\sim c$ are collected from domain $\beta$. The performance of previous domain adaptation methods may decline under the influence of the domain shift between different categories of the source data. In this paper, we propose a feature element-wise weighting method to map different domains of different categories close. In our proposed method, different elements of the feature vector are adjusted in different strength to achieve the goal that we want to bridge the domain shift instead of confusing the categories. 
	
	\section{Mixed Set Domain Adaptation}
	In this section, we first introduce the problem settings of Mixed Set Domain Adaptation. Then, the feature element-wise weighting (FEW) method is presented in detail.
	
	\subsection{Problem Setting}
	In unsupervised mixed set domain adaptation, labeled source dataset $S = \{X^S, Y^S\}$ and unlabeled target dataset $T = \{X^T\}$ are given. In the context of this paper, $S$ contains two domains, i.e., category $1\sim k$ are from domain $\alpha$: $S^\alpha = \{X^{\alpha}, Y^{\alpha}\}$ while category $k+1\sim c$ are collected from domain $\beta$: $S^\beta = \{X^{\beta}, Y^{\beta}\}$. Here $X^\alpha = \{x^{\alpha}_i|i=1,2,\cdots, n^{\alpha}\}$ and $X^\beta = \{x^{\beta}_i|i=1,2,\cdots, n^{\beta}\}$, $n^\alpha$ and  $n^\beta$ are the number of the data of domain $\alpha$ and domain $\beta$ respectively. $Y^\alpha = \{y^\alpha_i|y^\alpha_i \in \{1,2,\dots,k\}\}$ and $Y^\beta = \{y^\beta_i|y^\beta_i \in \{k+1,k+2,\dots,c\}\}$. Then, we have $S = S^\alpha \cup S^\beta = \{X^\alpha\cup X^\beta, Y^\alpha\cup Y^\beta\}$. 
	
	$P(X^{\alpha}_y)$ (resp. $P(X^{\beta}_y)$) is used to denote the distribution of the content information contained in $X^\alpha$ (resp. $X^\beta$) and $P(X^{\alpha}_d)$ (resp. $P(X^{\beta}_d)$) denotes the distribution of the domain information contained in $X^\alpha$ (resp. $X^\beta$). Under the traditional settings of domain adaptation, domain $\alpha$ and domain $\beta$ are the same, thus we have $P(X^{\alpha}_y) \neq P(X^{\beta}_y)$ but $P(X^{\alpha}_d) = P(X^{\beta}_d)$. While in the settings of MSDA, we have $P(X^{\alpha}_y) \neq P(X^{\beta}_y)$ and $P(X^{\alpha}_d) \neq P(X^{\beta}_d)$. Thus not only label information $X_y$ will be considered but also domain information $X_d$ would be taken into account when classifying $X^\alpha$ and $X^\beta$ in MSDA problems. As a result, some content-irrelevant part of an image will be focused on and consequently influences the performance of adaptation. Heat maps can reflect which area the model focuses on to some degree, thus we show an example of this problem in Figure \ref{fig:misclassified_example} utilizing ``Grad-Cam'' \cite{selvaraju2017grad}. Figure \ref{fig:misclassified_example} (a) is output by the model trained on typical task W$\rightarrow$D using CDAN \cite{long2018conditional} which classifies the target image correctly (i.e., class $26$, scissors). Then as shown in Figure \ref{fig:misclassified_example} (b), using CDAN on MSDA transfer task \{A,W\}$\rightarrow$D, the same target image is misclassified to class $17$ (i.e., mug) because of focusing on too much content-irrelevant area (the red part) compared with Figure \ref{fig:misclassified_example} (a). And after applying our proposed method ``FEW'', the network focuses more on the object and classifies it correctly as shown in Figure \ref{fig:misclassified_example} (c). 
	
	\begin{figure}[h]
		\centering
		\subfigure[CDAN(Tra.)]{
			\includegraphics[width = 0.26\linewidth]{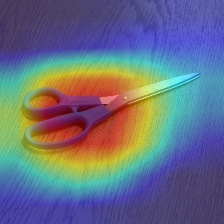}
		}\hspace{0.1in}
		\subfigure[CDAN]{
			\includegraphics[width = 0.26\linewidth]{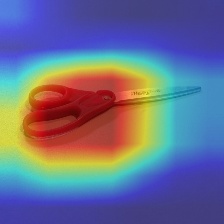}
		}\hspace{0.1in}
		\subfigure[FEW]{
			\includegraphics[width = 0.26\linewidth]{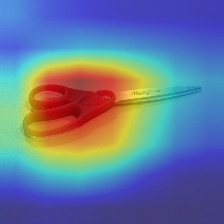}
		}
		\caption{A case where domain shift between different categories influences the classification result.}
		\label{fig:misclassified_example}
	\end{figure}
	Figure \ref{fig:misclassified_example} (a), (b), and (c) illustrates the significance of solving MSDA problems.
	
	\subsection{Feature Element-Wise Weighting}
	Since domain shift between different categories of source data influences the domain adaptation performance, it should be addressed together with the distribution discrepancy between the source data and the target data. Here we want to address domain shift between different categories, thus instead of minimizing their conditional distribution discrepancy which focuses on domain shift between the same categories, we choose to reduce their marginal distribution discrepancy as shown in eq. \ref{eq:mar_adv}. 
	\begin{equation}
	\begin{aligned}
	\min & \max V(G,D_1) = \mathbb{E}_{x^{\alpha}\sim p_{\alpha}(x)} [log D_1 (G^{f}(x^{\alpha})] \\
	& + \mathbb{E}_{x^{\beta}\sim p_{\beta}(x)}[log(1 - D_1 (G^{f}(x^{\beta})))] \\
	\end{aligned}
	\label{eq:mar_adv}
	\end{equation}
	Here, source data of domain $\alpha$ follows distribution $p_\alpha(x)$: $x^{\alpha}\sim p_{\alpha}(x)$ and source data of domain $\beta$ follows distribution $p_\beta(x)$: $x^{\beta}\sim p_{\beta}(x)$. $G$ is the feature extractor and $D_1$ is the discriminator. $G^f (x^{\alpha})$ and $G^f (x^{\beta})$ denote the deep features of source domain $\alpha$ and source domain $\beta$ extracted by $G$ respectively. $D_1(\cdot)$ denotes the probability of a source data belonging to domain $\alpha$ predicted by $D_1$.
	
	However, minimizing marginal distribution discrepancy may cause category confusion by mapping features from different categories close. Since we want to bridge the domain shift instead of confusing categories, we propose to give larger weights to feature elements that contain more domain information than elements which contain more content information. Then, the feature element-wise weighted adversarial learning process becomes 
	\begin{equation}
	\begin{aligned}
	\min & \max V(G,D_1)  \\ 
	& = \mathbb{E}_{x^{\alpha}\sim p_{\alpha}(x)} [log D_1 (W \odot G^{f}(x^{\alpha})] + \\
	& \qquad \mathbb{E}_{x^{\beta}\sim p_{\beta}(x)}[log(1 - D_1 (W \odot G^{f}(x^{\beta})))] \\
	& = \mathbb{E}_{f ^\alpha\sim G_{\alpha}(f)}[log D_1 (W \odot f^\alpha)] + \\
	& \qquad \mathbb{E}_{f^\beta\sim G_{\beta}(f)}[log(1 - D_1 (W \odot f^\beta))]
	\end{aligned}
	\label{eq:mar_adv_weight}
	\end{equation}
	Here $G_\alpha(f)$ and $G_\beta(f)$ denote the distributions of deep features $f^{\alpha}$ and $f^{\beta}$ respectively. Operator ``$\odot$'' denotes the element-wise multiplication. The dimension of deep feature vector $f$ is $l_f$, i.e., $f = [f_1, f_2,\dots, f_{l_f}]$. Then the weight vector ``$W$'' should have the same dimension as $f$. In our proposed method, ``$W$'' is computed from the weight matrix ``$w$'' of the last fully connected layer as shown in the left panel of Figure \ref{fig:msda_method}. Use $c$ to denote the number of classes. Then, ``$w$'' is a matrix of size $c \times l_f$. In neural networks, the inner product of a feature vector and the $i$th row of $w$ (i.e., $w_i$) outputs the probability of this feature belonging to class $i$, i.e., $p_i = w_{i1}f_1+w_{i2}f_2+\dots +w_{il_f}f_{l_f}$. Thus, elements of $w_i$ can be used to measure how much a feature element weights for being classified to class $i$. For example, $w_{ik}$ is the $k$th element of $w_i$, then it can reflect how much the $k$th element $f_k$ of a feature vector weights for being classified to the $i$th category. Then, we assume that those feature elements which have small weights for all categories are more related to domain information instead of category information. Following this assumption, $W$ is defined as
	\begin{equation}
	\begin{aligned}
	W & = \frac{1 - \sigma(\hat{w})}{avg(1 - \sigma(\hat{w}))}, \\ 
	& \hat{w}_k = \sum_{i}^{c} w_{ik}
	\end{aligned}
	\label{eq:ele_weight}
	\end{equation}	
	Here $\hat{w} = w_1 \oplus w_2 \oplus \dots \oplus w_c, \hat{w} \in \mathbb{R}^{1\times l_f}$ is the element-wise sum by column of $w$ as shown in the right panel of Figure \ref{fig:msda_method} and $\sigma$ denotes the softmax function which is used to convert elements of $\hat{w} = [\hat{w}_1, \hat{w}_2, \cdots, \hat{w}_{l_f}]$ to probabilities. ``$avg(1 - \sigma(\hat{w}))$'' is the average of each element in $1 - \sigma(\hat{w})$, i.e., $W$ is normalized. As shown in eq. \ref{eq:ele_weight}, when the sum of the $k$th column of $w$ (i.e., $\hat{w}_k$) is small, then the $k$th element of each feature vector will be considered to be less important for predicting labels, i.e., it contains more domain information $X_d$. Thus, the $k$th feature element will be given larger weight when mapping domain $\alpha$ and domain $\beta$ close. Since $w$ is not stable and accurate enough at the beginning of the training process, we use $\tilde{W}$ instead of $W$ in our experiments as shown in eq. \ref{eq:tilde_w}.
	\begin{equation}
	\begin{aligned}
	\tilde{W} = (W- 1)\times \delta + 1
	\end{aligned}
	\label{eq:tilde_w}
	\end{equation}
	Here $\delta = \frac{2.0}{1.0+\exp(-\eta\times\frac{iter}{max\_iter})}$ \cite{long2018conditional}, $\eta$ is set to $10$, $iter$ is the current number of iteration, and $max\_iter$ is set to $10000$ in our experiments. Then, the gradients back propagated to $f_k$ (i.e., the $k$th element of each feature vector) from eq. \ref{eq:mar_adv_weight} is
	\begin{equation}
	\begin{aligned}
	\frac{\partial V(G,D_1)}{\partial f_k} = \tilde{W_k} \cdot \frac{\partial V(G,D_1)}{\partial (\tilde{W_k} f_k)}
	\end{aligned}
	\label{eq:grad_tilde_w}
	\end{equation}
	where $\tilde{W_k}$ is the $k$th element of $\tilde{W}$.
	
	\begin{table*}[t]
		\centering
		\caption[LoF entry]{Accuracy on Office-31 under standard settings of Mixed Set domain adaptation. }
		\renewcommand\arraystretch{1.0}
		\small
		\begin{tabular}{p{2.3cm}<{\centering}|p{0.8cm}<{\centering}|p{1.5cm}<{\centering}p{1.5cm}<{\centering}p{1.5cm}<{\centering}p{1.5cm}<{\centering}p{1.5cm}<{\centering}p{1.5cm}<{\centering}|p{0.8cm}<{\centering}}
			\hline
			Method & MSDA & A$\rightarrow$W & A$\rightarrow$D & D$\rightarrow$W & D$\rightarrow$A & W$\rightarrow$D & W$\rightarrow$A & Avg. \\
			\hline
			ResNet-50 & \ding{55}  & $68.4\pm0.2$ & $68.9\pm0.2$ & $96.7\pm0.1$ & $62.5\pm0.3$ & $99.3\pm0.1$ & $60.7\pm0.3$ & $76.1$ \\
			DANN & \ding{55} & $82.0\pm0.4$ & $79.7\pm0.4$ & $96.9\pm0.2$ & $68.2\pm0.4$ & $99.1\pm0.1$ & $67.4\pm0.5$ & $82.2$ \\
			CDAN & \ding{55} & $93.1\pm0.2$ & $89.8\pm0.3$ & $98.2\pm0.2$ & $70.1\pm0.4$ & $100\pm0.0$ & $68.0\pm0.4$ & $86.6$ \\
			FEW & \ding{55} & $91.5\pm1.9$ & $85.2\pm1.5$ & $98.9\pm0.0$ & $68.7\pm1.6$ & $100\pm0.0$ & $66.5\pm0.9$ & $85.1$ \\
			\hline
			Method & MSDA & \{A,D\}$\rightarrow$W & \{A,W\}$\rightarrow$D & \{D,A\}$\rightarrow$W & \{D,W\}$\rightarrow$A & \{W,A\}$\rightarrow$D & \{W,D\}$\rightarrow$A & Avg. \\
			\hline
			ResNet-50 & \ding{51} & $86.0\pm0.6$ & $90.7\pm0.5$ & $92.7\pm0.1$ & $62.5\pm0.6$ & $92.5\pm0.3$ & $59.8\pm0.3$ & $80.7$ \\
			DANN & \ding{51} & $92.9\pm1.2$ & $91.1\pm0.5$ & $93.4\pm0.7$ & $66.7\pm1.1$ & $94.0\pm0.5$ & $65.0\pm1.8$ & $83.9$ \\
			CDAN & \ding{51} & $95.5\pm0.4$ & $92.6\pm0.3$ & $97.1\pm0.4$ & $68.5\pm0.3$ & $97.0\pm0.5$ & $69.0\pm0.5$ & $86.6$ \\
			\hline
			CDAN+DANN & \ding{51} & $95.4\pm 0.3$ & $94.8\pm0.6$ & $\bm{97.8\pm 0.2}$ & $68.3\pm0.7$ & $97.1\pm0.1$ & $69.5\pm0.9$ & $87.1$ \\
			FEW & \ding{51} & $\bm{95.9\pm0.2}$ & $\bm{95.0\pm0.5}$ & $97.5\pm0.1$ & $\bm{68.7\pm0.5}$ & $\bm{97.3\pm0.3}$ & $\bm{70.4\pm0.4}$ & $\bm{87.5}$ \\
			\hline
		\end{tabular}
		\label{tab:office_resnet50}
	\end{table*}
	\begin{table*}[t]
		\setlength{\belowcaptionskip}{0.5cm}
		\centering
		\caption[LoF entry]{Accuracy on Office-Home under standard settings of Mixed set domain adaptation.}
		\renewcommand\arraystretch{1.0}
		\small
		\begin{tabular}{p{2.3cm}<{\centering}|p{0.8cm}<{\centering}|p{1.5cm}<{\centering}p{1.5cm}<{\centering}p{1.5cm}<{\centering}p{1.5cm}<{\centering}p{1.5cm}<{\centering}p{1.6cm}<{\centering}|p{0.8cm}<{\centering}}
			\hline
			Method & MSDA & Ar$\rightarrow$Cl & Pr$\rightarrow$Cl & Ar$\rightarrow$Pr & Cl$\rightarrow$Ar & Cl$\rightarrow$Pr &  Cl$\rightarrow$Rw & Avg. \\
			\hline
			ResNet50 & \ding{55} & $45.1$ & $41.4$ & $64.9$ & $54.3$ & $60.4$ & $62.8$ & $54.8$ \\
			DANN & \ding{55} & $47.3$ & $45.8$ & $64.6$ & $53.7$ & $62.1$ & $64.1$ & $56.3$  \\			
			CDAN & \ding{55} & $49.0$ & $48.3$ & $69.3$ & $54.4$ & $66.0$ & $68.4$ & $59.2$ \\
			FEW & \ding{55} & $49.1$ & $44.0$ & $64.6$ & $51.2$ & $61.8$ & $60.4$ & $55.2$ \\
			\hline		
			Method & MSDA & \{Ar,Cl\}$\rightarrow$Pr & \{Ar,Pr\}$\rightarrow$Cl & \{Cl,Pr\}$\rightarrow$Ar & \{Cl,Rw\}$\rightarrow$Ar & \{Rw,Cl\}$\rightarrow$Ar & \{Pr,Cl\}$\rightarrow$Rw & Avg. \\
			\hline
			ResNet50 & \ding{51} & $61.0\pm0.7$ & $42.5\pm1.0$ & $51.1\pm0.6$ & $54.6\pm0.2$ & $59.2\pm0.4$ & $63.9\pm0.3$ & $51.7$ \\
			DANN & \ding{51} & $60.3\pm0.4$ & $45.6\pm0.2$ & $47.3\pm0.4$ & $55.8\pm0.7$ & $57.2\pm0.5$ & $65.8\pm0.5$ & $55.3$  \\			
			CDAN & \ding{51} & $59.7\pm0.9$ & $45.9\pm0.5$ & $52.0\pm0.5$ & $56.4\pm0.2$ & $\bm{63.9\pm0.3}$ & $67.8\pm0.6$ & $57.6$ \\
			\hline
			CDAN+DANN & \ding{51} & $62.4\pm0.7$ & $\bm{47.1\pm0.2}$ & $\bm{52.4\pm0.4}$ & $56.3\pm0.6$ & $62.2\pm0.3$ & $67.7\pm0.3$ & $58.0$ \\
			FEW & \ding{51} & $\bm{62.7\pm0.7}$ & $\bm{47.1\pm0.8}$ & $\bm{52.4\pm0.6}$ & $\bm{56.8\pm0.1}$ & $62.8\pm0.3$ & $\bm{68.3\pm0.2}$ & $\bm{58.4}$ \\
			\hline
			\multicolumn{9}{c}{} \\
			\multicolumn{9}{c}{} \\
			\hline
			Method & MSDA & \{Pr,Ar\}$\rightarrow$Cl & \{Rw,Pr\}$\rightarrow$Ar & \{Ar,Rw\}$\rightarrow$Cl & \{Cl,Rw\}$\rightarrow$Pr & \{Pr,Rw\}$\rightarrow$Ar & \{Rw,Ar\}$\rightarrow$Cl & Avg. \\
			\hline			
			CDAN & \ding{51} & $46.0\pm0.7$ & $\bm{62.3\pm0.3}$ & $47.6\pm0.5$ & $66.4\pm1.3$ & $58.2\pm0.7$ & $51.4\pm0.5$ & $55.3$ \\
			\hline
			CDAN+DANN & \ding{51} & $\bm{47.1\pm0.8}$ & $61.8\pm0.6$ & $\bm{48.2\pm0.3}$ & $65.8\pm0.3$ & $57.5\pm0.6$ & $48.7\pm0.8$ & $54.9$ \\
			FEW & \ding{51} & $\bm{47.1\pm0.6}$ & $62.0\pm0.3$ & $\bm{48.2\pm0.3}$ & $\bm{67.1\pm0.6}$ & $\bm{58.5\pm0.2}$ & $\bm{51.5\pm0.3}$ & $\bm{55.7}$ \\
			\hline
		\end{tabular}
		\label{tab:officehome_resnet50}
	\end{table*}
	
	\subsection{Overall Training Process}
	The end-to-end architecture of the propose method is illustrated in Figure \ref{fig:msda_method}. $f^s$ (resp. $f^t$) is the source (resp. target) features extracted by $G$. $o^s = f^s w$ and $o^t = f^t w$ denote the source and target output of the last fully connected layer respectively. In Figure \ref{fig:msda_method}, discriminator $D_1$ is used to distinguish which source domain an element-wise weighted source feature comes from and discriminator $D_2$ is used to tell whether a feature comes from the source dataset or the target dataset. The feature generator $G$ aims to extract deep features that can confuse $D_1$ and $D_2$. The loss function of $D_1$ is given in eq. \ref{eq:mar_adv_weight} and $D_2$ is trained in the same way as proposed in \cite{long2018conditional}:
	\begin{equation}
	\begin{aligned}
	\min\limits_G & \max\limits_{D} V(G,D_2) = \mathbb{E}_{(f^s,p^s)\sim G_{s}}[logD_2(f^s \otimes p^s)] \\
	& + \mathbb{E}_{(f^t,p^t)\sim G_{t}}[log(1-D_2(f^t \otimes p^t))].
	\end{aligned}
	\label{eq:cdan}
	\end{equation}
	Here $p^s = \sigma(o^s) = \sigma(f^s w)$ and $p^t = \sigma(o^t) = \sigma(f^t w)$ are the $softmax$ probabilities distributed over each class, and ``$\otimes$'' denotes the tensor product operation. $(f^s,p^s)$ and $(f^t,p^t)$ follow distribution $G_s$ and $G_t$ respectively. The classification loss of the source data is
	\begin{equation}
	-\mathcal{L}_{y^s} = \frac{1}{n^s}\sum\limits^{n^s}_{i}log(p^s_{i,y_i})
	\label{eq:source_loss}
	\end{equation}
	where $p^s_{i,y_i}$ is the probability of the $i$th source data belonging to its ground truth category $y^s_i$. Then, the integrated loss function is
	\begin{equation}
	\begin{aligned}
	\min \limits_G V(G,D_1) + V(G,D_2) + \mathcal{L}_{y^s} \\
	\max \limits_{D_1, D_2} V(G,D_1) + V(G,D_2)
	\end{aligned}
	\label{eq:total_loss}
	\end{equation}
	
	\begin{table*}[t]
		\centering
		\caption[LoF entry]{Accuracy under different category splitting strategies of MSDA.}
		\small
		\begin{tabular}{p{2.0cm}<{\centering}p{1.5cm}<{\centering}p{1.5cm}<{\centering}p{1.0cm}<{\centering}|p{1.6cm}<{\centering}p{1.6cm}<{\centering}p{1.0cm}<{\centering}}
			\hline
			\multirow{2}{*}{Method} &
			\multicolumn{3}{c|}{ Office-31} & \multicolumn{3}{c}{ Office-Home }  \cr 
			& \{A,D\}$\rightarrow$W & \{W,D\}$\rightarrow$A & Avg. & \{Rw,Cl\}$\rightarrow$Ar & \{Pr,Rw\}$\rightarrow$Ar & Avg. \\
			\hline
			CDAN & $98.2\pm0.2$ & $66.8\pm0.3$ & $82.5$ & $49.5\pm0.6$ & $63.9\pm0.9$ & $56.7$ \\
			CDAN+DANN & $98.2\pm0.2$ & $67.0\pm0.8$ & $82.6$ & $50.0\pm0.0$ & $63.8\pm0.6$ & $56.9$ \\
			FEW & $\bm{98.4\pm0.3}$ & $\bm{67.9\pm0.3}$ & $\bm{83.2}$ & $\bm{50.9\pm0.2}$ & $\bm{64.5\pm0.6}$ & $\bm{57.7}$ \\
			\hline
		\end{tabular}
		\label{tab:office_diff}
	\end{table*}
	
	The overall training process is given in Algorithm~\ref{algo_1}, where ``$n$'' denotes the total number of iterations. 
	
	{\renewcommand\baselinestretch{0.8}\selectfont
		\renewcommand{\algorithmicrequire}{\textbf{Input:}}
		\renewcommand{\algorithmicensure}{\textbf{Procedure:}}
		\begin{algorithm}
			\caption{Feature Element-Wise Weighting (FEW)}\label{algo_1}
			\begin{algorithmic}[1]		
				\REQUIRE
				\STATE Source dataset $\mathcal{S} = \mathcal{S}^\alpha \cup\mathcal{S}^\beta$; Target data $\mathcal{T}$;
				\ENSURE ~~\\
				\FOR {$i=1:n$}
				\STATE	Extract deep features $f^\alpha$, $f^\beta$, $f^s$, $f^t$, $p^s$ and $p^t$; \\
				\STATE	Extract $w$, then: \\
				\quad $W = \frac{1 - \sigma(\hat{w})}{avg(1 - \sigma(\hat{w}))}$, where $\hat{w}_k = \sum_{i}^{c} w_{ki}$.
				\STATE Following eq. \ref{eq:mar_adv_weight} to bound distribution discrepancy between $\bm{Wf^\alpha}$ and $\bm{Wf^\beta}$; \\
				\STATE Following eq. \ref{eq:cdan} to bound distribution discrepancy between $\bm{f^s \otimes p^s}$ and $\bm{f^t \otimes p^t}$; \\
				\STATE Update $G$, $D_1$, and $D_2$ by eq. \ref{eq:total_loss}; \\
				\ENDFOR
				\STATE Test on the target data; \\
				\renewcommand{\algorithmicensure}{\textbf{Output:}}
				\ENSURE ~~\\
				\STATE Classification accuracy of the target data.
			\end{algorithmic}
		\end{algorithm}
		\par}
	
	\section{Experiments}
	In this section, we first describe the setup of our experiments. Then we evaluate the proposed method and several previous state-of-the-art methods. Qualitative analysis such as convergence and heat map are also given to further illustrate the significance of MSDA and the effectiveness of the proposed FEW method.
	
	\subsection{Experimental Setup}
	\textbf{Datasets.}
	The \textbf{Office-31} dataset\cprotect\footnote{https://people.eecs.berkeley.edu/\Verb!~!jhoffman/domainadapt/\\ \Verb!#!datasets$\_$code} \cite{saenko2010adapting} contains images originated from three domains: Amazon (A), Webcam (W), and DSLR (D). These three domains consist of the same 31 categories. By using Office-31, we can evaluate the proposed method on 6 MSDA transfer tasks: \{A,D\}$\rightarrow$W, \{A,W\}$\rightarrow$D, \{D,A\}$\rightarrow$W, \{D,W\}$\rightarrow$A, \{W,A\}$\rightarrow$D, and \{W,D\}$\rightarrow$A.
	
	The \textbf{Office-Home}\footnote{http://hemanthdv.org/OfficeHome-Dataset/}~\cite{venkateswara2017deep} dataset consists of 4 domains of everyday objects in office and home settings: Artistic (Ar), Clip Art (Cl), Product (Pr) and Real-World (Rw). There are 65 categories in each domain and more than 15,000 images in total. Compared with Office-31, this is a more challenging dataset for domain adaptation evaluation because each domain contains more categories and different domains have significant domain shifts. For this dataset, we selected 12 MSDA transfer tasks for evaluation: \{Ar,Cl\}$\rightarrow$Pr, \{Ar,Pr\}$\rightarrow$Cl, \{Cl,Rw\}$\rightarrow$Ar, \{Rw,Cl\}$\rightarrow$Ar, \{Cl,Pr\}$\rightarrow$Ar, \{Pr,Ar\}$\rightarrow$Cl, \{Pr,Cl\}$\rightarrow$Rw, \{Rw,Pr\}$\rightarrow$Ar, \{Ar,Rw\}$\rightarrow$Cl, \{Cl,Rw\}$\rightarrow$Pr, \{Pr,Rw\}$\rightarrow$Ar, and \{Rw,Ar\}$\rightarrow$Cl.
	
	In this paper, we specify a standard setting of MSDA as follows. For dataset Office-31, category $1\sim 20$ of the source data are from domain $\alpha$ while category $21\sim 31$ are from domain $\beta$. For Office-Home, category $1\sim 40$ of the source data are from domain $\alpha$ and category $41\sim 65$ come from domain $\beta$. For example, in task \{Ar,Cl\}$\rightarrow$Pr, category $1\sim40$ are from Ar and category $41\sim65$ are from Cl. In addition, other category splitting strategies are also evaluated in our experiments. 
	
	\textbf{Network Architecture.}
	The feature extractor $G$ in our experiments was built based on the architecture of ResNet50~\cite{he2016deep} pre-trained on ImageNet~\cite{deng2009imagenet}. To fairly compare with other adversarial methods \cite{long2018conditional}, a $bottleneck$ layer with size 256 is added before the last full-connected layer. The two discriminators used in our experiments both consist of three fully connected layers. The size of the first two layers are 1024 followed by ReLU activation layer and dropout layer while the dimension of final output is 1.
	
	\textbf{Training Process.}
	Models of our experiments were trained based on the framework \textbf{Pytorch} using GPU \textbf{Tesla V100 32G} on Linux system. Following the standard fine-tuning procedure, the learning rates of the first few layers were set to a small number to slightly tune the parameters initialized from the pre-trained model. The learning rate for the other layers like $bottleneck$ and the last fully connected layer can be set larger, typically 10 times that of the lower layers. We used the stochastic gradient descent (SGD) update strategy with a momentum of 0.9. The base learning rates $base\_lr$ for all tasks of Office-31 dataset and Office-Home dataset are set to 0.001. The learning rate was changed by the following strategy during the training process: $base\_lr\times (1+\gamma \times iter)^{-power}$, where $power$ was set to $0.75$ and $\gamma$ was set to $0.001$ throughout all experiments and $iter$ is the current number of iterations. The total number of iterations were set to $15000$ for all tasks.
	
	\begin{figure*}[ht]
		\centering
		\subfigure[\{A,W\}$\rightarrow$D]{
			\includegraphics[width=0.3\linewidth]{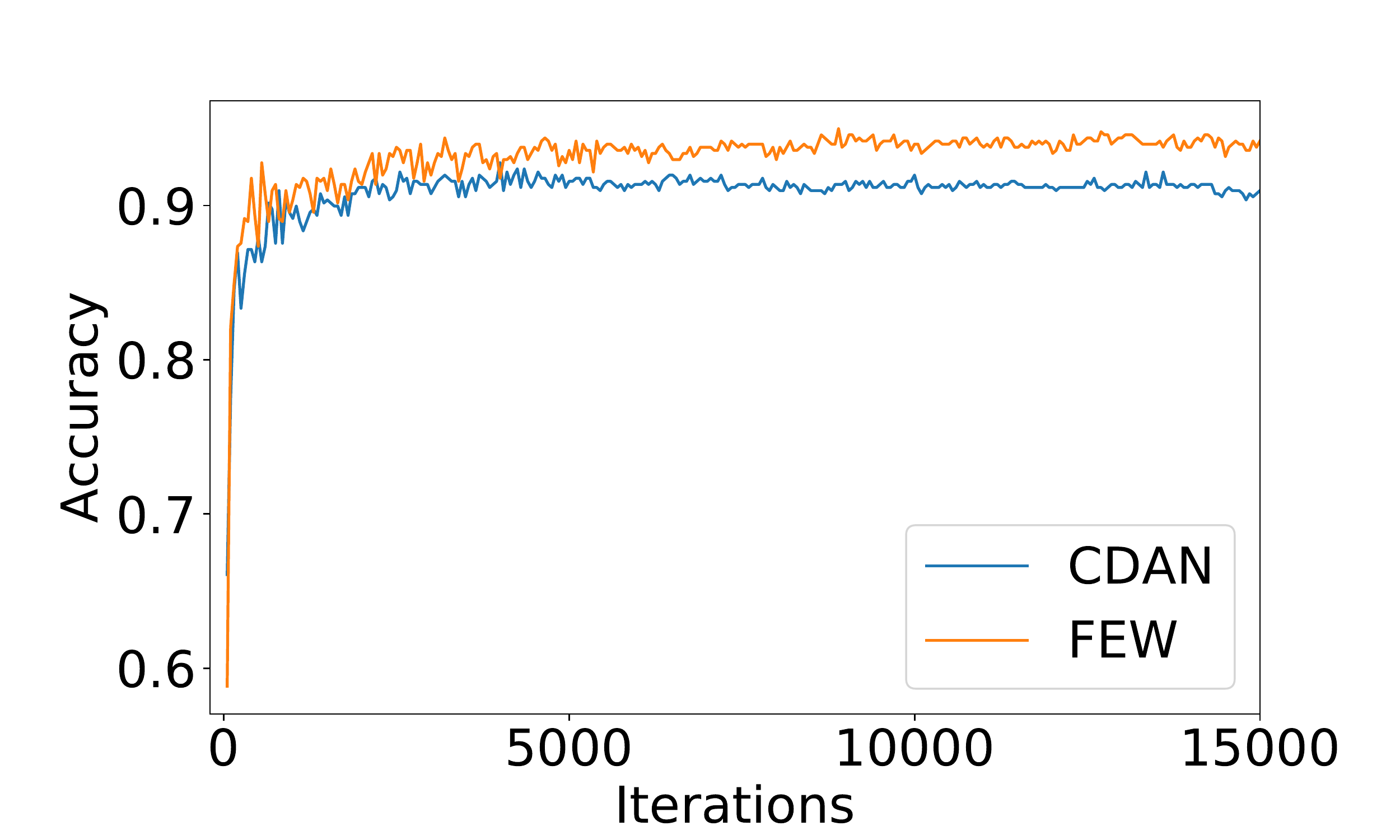}
		}
		\subfigure[\{Ar,Cl\}$\rightarrow$Pr]{
			\includegraphics[width=0.3\linewidth]{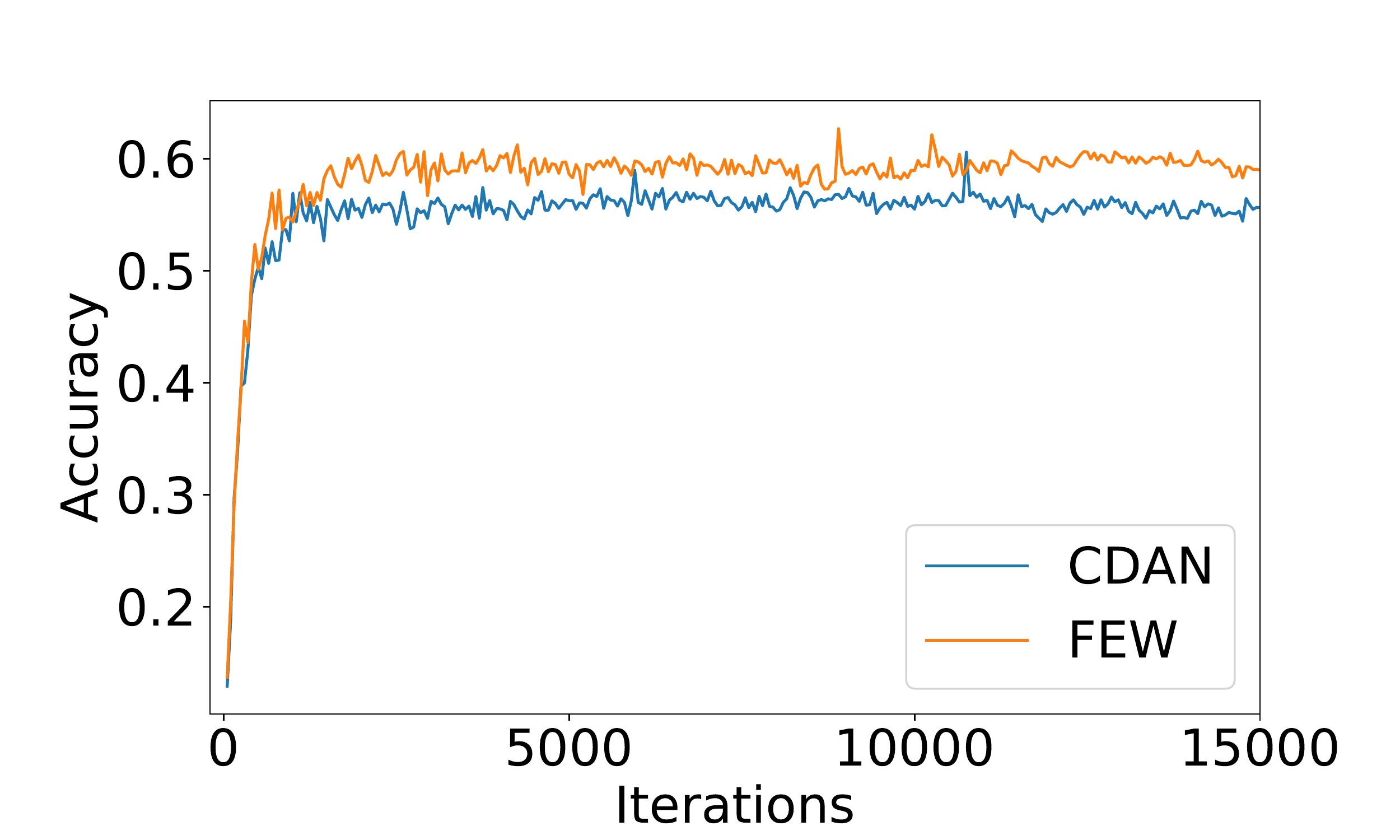}
		}
		\subfigure[\{Cl,Rw\}$\rightarrow$Ar]{
			\includegraphics[width=0.3\linewidth]{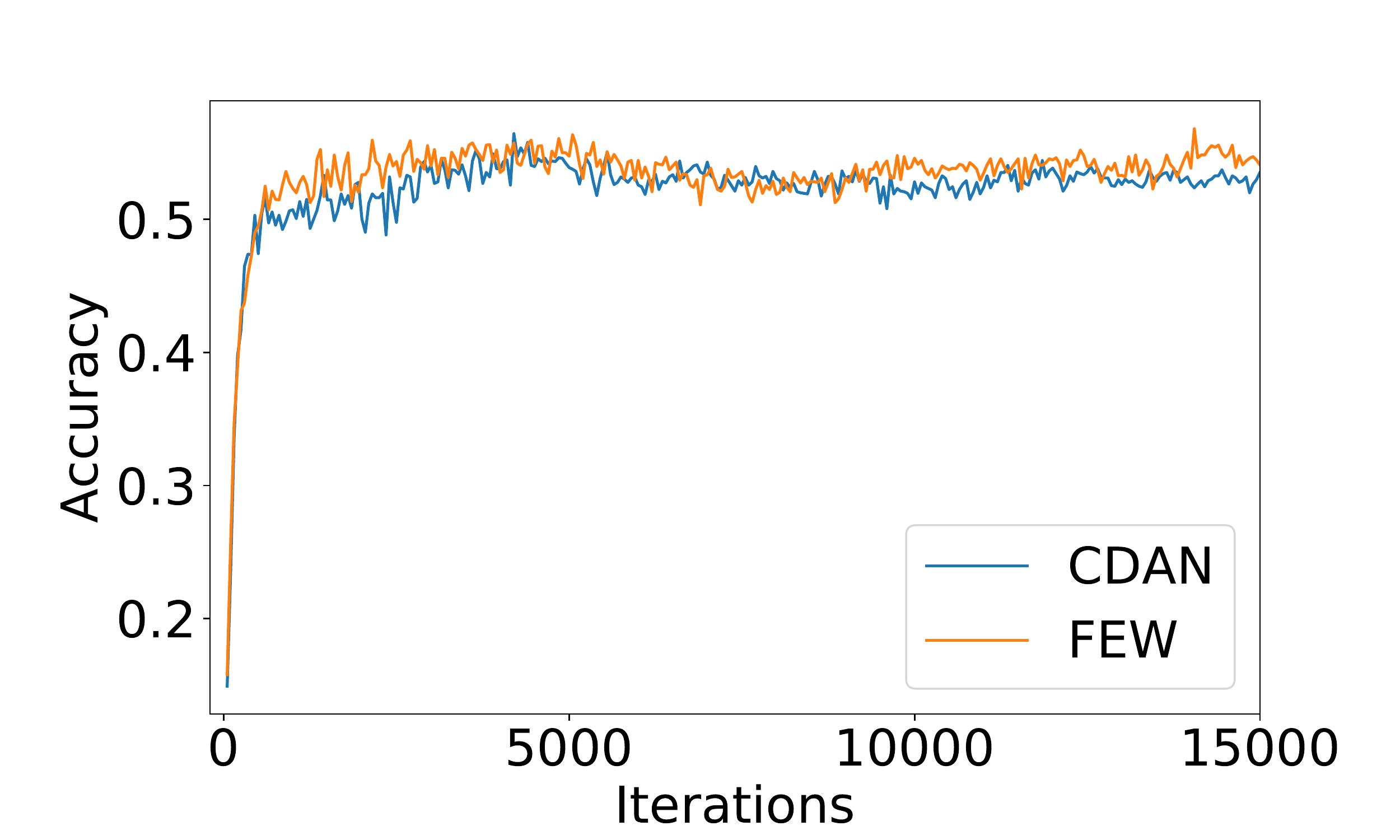}
		}
		\caption{(a), (b), and (c) are the accuracy curves of task \{A,W\}$\rightarrow$D, \{Ar,Cl\}$\rightarrow$Pr, and \{Cl,Rw\}$\rightarrow$Ar respectively. The orange curve denotes the accuracies achieved by the proposed method ``FEW'' of  each iteration. The blue curve denotes the accuracies of ``CDAN'' \cite{long2018conditional}.}\label{fig:convergence}
	\end{figure*}
	
	\begin{figure*}[t]
		\centering
		\includegraphics[width=0.9\linewidth]{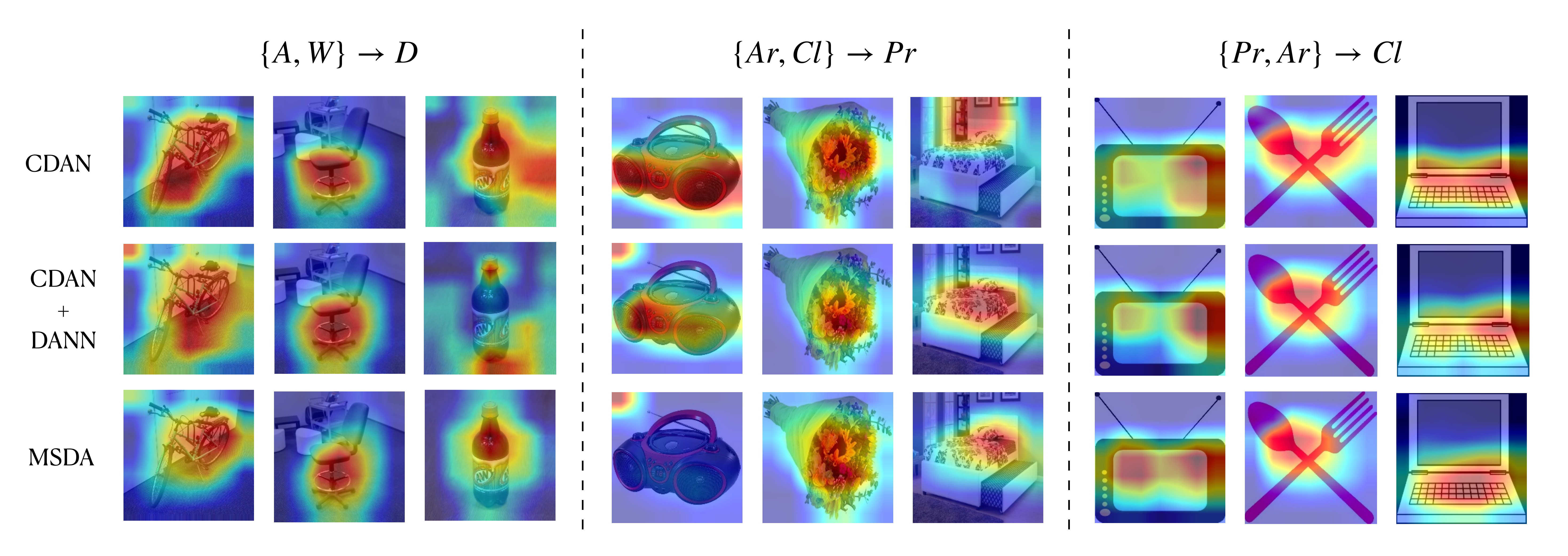}
		\caption{Heat maps of several selected examples under different methods: CDAN, CDAN+DANN, FEW. Examples are selected from 3 tasks: \{A,W\}$\rightarrow$D (left), \{Ar,Cl\}$\rightarrow$Pr (middle), and \{Pr,Ar\}$\rightarrow$Cl (right). Red regions correspond to high score for class. (Best viewed in color.)}\label{fig:grad_cam}
	\end{figure*}
	
	\subsection{Results}
	In this section, the experimental results under the given standard MSDA settings are reported first. Then, we also report results of some tasks evaluated under different category splitting strategy of source data. Following previous works \cite{long2018conditional}, each task was run for $3$ times and the average accuracies are reported in this paper. In this way, more than 300 times of experiments were implemented in our work.
	
	Four methods are selected as baselines: \textbf{1) ResNet50} \cite{he2016deep}: the ResNet50 model pretrained on ImagNet is fine-tuned on source data without additional adaptation algorithms; \textbf{2) DANN} \cite{ganin2016domain}: an adversarial learning method which reduces the marginal distribution discrepancy between source features and target features; \textbf{3) CDAN} \cite{long2018conditional}: using tensor products of features and probabilities as the inputs of the discriminator to bound the conditional distribution discrepancy between source data and target data in an adversarial manner; \textbf{4) CDAN+DANN}: this method can be seen as an \textbf{ablation} study of the proposed method. Here, we directly use DANN to reduce the distribution discrepancy between categories of domain $\alpha$ and categories of domain $\beta$ without using the feature element-wise weights ``$W$''. And CDAN is used to reduce the distribution discrepancy between source data and target data.
	
	\subsubsection{Under Standard Settings}
	The experimental results of Office-31 under standard settings of MSDA are listed in Table \ref{tab:office_resnet50} and those of Office-Home are reported in Table \ref{tab:officehome_resnet50}. The best results are bold. In order to compare MSDA with traditional domain adaptation, we also report experimental results for conventional domain adaptation, i.e., all categories of source data are from the same domain. A ``\ding{51}'' placed in column ``MSDA'' indicates that the method is applied under the settings of MSDA while a ``\ding{55}'' means that the results are obtained under the settings of the conventional domain adaptation.
	
	From Table \ref{tab:office_resnet50} and Table \ref{tab:officehome_resnet50} we can observe that the performance of previous methods decline on MSDA tasks compared with that on conventional domain adaptation tasks. For Office-31, the decline is not obvious since domains in Office-31 do not have large distribution discrepancy, especially for ``W'' and ``D''. For example, on task \{A,D\}$\rightarrow$W, traditional domain adaptation methods (i.e., ResNet50, DANN, CDAN) achieve higher accuracies than task A$\rightarrow$W under the effect of domain ``D''. Thus it is not easy to observe whether the performance is influenced by domain shift in source dataset from such kind of tasks. However, we can still observe that accuracies of traditional methods (i.e., ResNet, DANN, CDAN) on MSDA task \{W,D\}$\rightarrow$A are lower than those on both W$\rightarrow$A and D$\rightarrow$A, which reveals the influence of the domain shift inside source data. 
	
	For Office-Home, the decline in performance of previous methods can be more clearly observed. For example, for task \{Ar,Cl\}$\rightarrow$Pr, DANN's and CDAN's accuracies are lower than those of both Ar$\rightarrow$Pr and Cl$\rightarrow$Pr. The same phenomenon can be also observed on task \{Ar,Pr\}$\rightarrow$Cl in Table \ref{tab:officehome_resnet50}. Because of space limitation, we only report results of 6 traditional tasks, however, this performance declination can be observed on more tasks (as shown in Appendix B). Results on Office-Home further reveal that domain shift between different categories of source date can influence domain adaptation performance. Thus, MSDA problem is valid and needs to be explored.
	
	From Table \ref{tab:office_resnet50} and Table \ref{tab:officehome_resnet50}, we can see that the proposed ``FEW'' can achieve better performance on $15$ tasks out of $18$ than previous typical domain adaptation methods, which indicates the effectiveness of ``FEW''. Then, to prove the significance of feature element-wise weight ``W'', we directly use DANN without weight $W$ when training $D_1$ and report the results in Table \ref{tab:office_resnet50} and Table \ref{tab:officehome_resnet50}, denoted as ``CDAN+DANN''. It can be observed that the performance of ``FEW'' is better than or equal to ``CDAN+DANN'' on $17$ tasks out of $18$. Thus, we can infer that the feature element-wise weight ``W'' can help to alleviate the domain shift between different categories and cause less category confusion than ``DANN''. We also apply ``FEW'' to traditional domain adaptation tasks, where we use it to reduce the distribution discrepancy between class $1\sim 20$ (resp. $1\sim40$) and class $21\sim31$ (resp. $41\sim65$) of the source domain for Office-31 (resp. Office-Home). When being applied on traditional tasks, ``FEW'' dose not show improvements on most tasks, which again indicates that the proposed method achieves better performance because of reduced domain discrepancy instead of category confusion since it does not work well when the categories are from the same domains. However, the proposed method has its limitation. For some tasks like \{Rw,Cl\}$\rightarrow$Ar in Table \ref{tab:officehome_resnet50}, performance of ``FEW'' is worse than traditional method ``CDAN''. This may be because that though we try to use $W$ to map feature elements that contain more domain information close, still features of the last layer include too much content information. Therefore, ``FEW'' can not work well on some tasks because of category confusion. Features that contain more domain information need to be explored in the future for MSDA.
	
	\subsubsection{Different Splitting Settings}
	Experimental results on several domain adaptation tasks under different category splitting settings of source data are reported in Table \ref{tab:office_diff}. For Office-31, category $1\sim 6$ are from domain $\alpha$ and category $7\sim 31$ are from domain $\beta$. For Office-Home, category $1\sim 10$ are from domain $\alpha$ while category $11\sim 65$ come from domain $\beta$. $4$ tasks are selected to report here: \{A,D\}$\rightarrow$W, \{W,D\}$\rightarrow$A, \{Rw,Cl\}$\rightarrow$Ar, and \{Pr,Rw\}$\rightarrow$Ar. More results could be found in Appendix C. Table \ref{tab:office_diff} shows that ``FEW'' can outperform previous method CDAN and method without ``$W$'', i.e., CDAN+DANN.

	\subsection{Analysis}
	\subsubsection{Convergence}
	In order to show the convergence of the proposed method, accuracies of each iteration achieved by models trained using traditional ``CDAN'' and the proposed ``FEW'' are plotted in Figure \ref{fig:convergence} for comparison. Here $3$ tasks are selected as examples: \{A,W\}$\rightarrow$D, \{Ar,Cl\}$\rightarrow$Pr, and \{Cl,Rw\}$\rightarrow$Ar as shown in Figure \ref{fig:convergence} (a), (b), and (c) respectively. From Figure \ref{fig:convergence}, we can observe that the proposed method converges well, which reveals that reducing distribution discrepancy between different categories of source data using the proposed method will not influence the convergence compared with using CDAN only.
	
	\subsubsection{Heat Map}
	In order to show the effect of the proposed method more intuitively, we use heat maps to show which area of the images influence the predicted results most, i.e., which pixels the models focus on. Here we use the deep neural network visualization method ``Grad-Cam'' \cite{selvaraju2017grad} to add a mask to original images. Different colors are used for different pixels according to their importance, decreasing from red to blue. Some examples are shown in Figure \ref{fig:grad_cam}. Here, we plot the heat maps of $3$ methods: CDAN, CDAN+DANN, and FEW. And for each method, $3$ tasks are evaluated: \{A,W\}$\rightarrow$D (left panel), \{Ar,Cl\}$\rightarrow$Pr (middle panel), and \{Pr,Ar\}$\rightarrow$Cl (right panel). Categories from left to right are: bike, chair, bottle, radio, flower, bed, TV, spoon, and laptop respectively. We can observe that in most cases, FEW can make the model focus more on the objects than other areas, which can reflect the effectiveness of the FEW method to some degree. For example, in task \{A,W\}$\rightarrow$D, model trained by FEW focuses more on the bike instead of the floor. Similar phenomena can also be observed for category bottle, bed, spoon, laptop, etc. However, FEW may cause misclassification sometimes, e.g., the $1$st column of task \{Ar,Cl\}$\rightarrow$Pr in Figure \ref{fig:grad_cam}. And for ``CDAN+DANN'' which maps different categories close without using $W$, it does not show its effectiveness in making the model focus on the object in most cases like bike, chair, bottle, TV, spoon, and laptop. This reveals the importance of exploiting the element-wise weight $W$.
	
	\section{Conclusion}
	In this paper, we propose a new type of domain adaptation: Mixed Set Domain Adaptation (MSDA). Under the settings of MSDA, category $1\sim k$ and category $k+1 \sim c$ of the source data are from different domains, which is more common in reality compared with conventional domain adaptation settings. Qualitative analysis and quantitative results show that the performance of typical domain adaptation methods can be influenced by domain shift between different categories of source dataset, which indicates the significance of proposing MSDA. We also propose a feature element-wise weighting (FEW) method to address MSDA problems. Experimental results show the effectiveness of the proposed method. Features that contain more domain information than content information still need to be explored for MSDA in the future.
	
	\cleardoublepage
	\bibliography{Msda}
	\clearpage
	\begin{appendices}
		\section{Appendix Overview} \label{app:op_proof}
		In the appendix, we provide the proof of the optimum achieved by FEW in Appendix A. Then in Appendix B, we report the complete experimental results including all traditional tasks for Office-Home. In Appendix C, more experimental results are given under different category splitting strategy of source data. Then, heat maps of more examples are plotted in Appendix D.
		
		\section{Appendix A. Optimum for element-wise weighted adversarial learning}
		In this section, we give the proof for the optimum of the feature element-wise weighted adversarial learning.
		
		\begin{theorem}
			Given the objective function $V(G,D_1)$ in eq. \ref{eq:mar_adv_weight} and following the proof of ``Proposition 1'' in~\cite{goodfellow2014generative}, \textit{for any fixed $G$, the optimal discriminator $D_1$ in eq. \ref{eq:mar_adv_weight} is }
			\begin{equation}
			D^{*}_{G}(W\odot f) = \frac{G_{x^\alpha}(W\odot f)}{G_{x^\alpha}(W\odot f) + G_{x^\beta}(W\odot f)}
			\label{eq:op_D}
			\end{equation}
			where $G_{x^\alpha}(W\odot f)$ denotes the distribution of the element-wise weighted deep features of $X^\alpha$ while $G_{x^\beta}(W\odot f)$ is that of $X^\beta$. The deep features are denoted by $f$.
			
		\end{theorem}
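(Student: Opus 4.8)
The plan is to follow the template of Proposition~1 in~\cite{goodfellow2014generative} almost verbatim, after one preliminary change of variables that absorbs the element-wise weighting. First I would fix $G$, and with it the weight vector $W$ (a deterministic function of the last-layer weights $w$), and regard $f \mapsto W \odot f$ as a fixed measurable map. Pushing the feature laws $G_\alpha(f)$ and $G_\beta(f)$ forward through this map gives precisely the densities $G_{x^\alpha}(\cdot)$ and $G_{x^\beta}(\cdot)$ appearing in the statement, and under this substitution the objective of eq.~\ref{eq:mar_adv_weight} becomes
\begin{equation*}
V(G,D_1) = \int G_{x^\alpha}(g)\log D_1(g)\, dg + \int G_{x^\beta}(g)\log\bigl(1 - D_1(g)\bigr)\, dg,
\end{equation*}
where I write $g = W \odot f$ for the weighted feature, to match the notation of eq.~\ref{eq:op_D}.

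Next I would merge the two integrals and maximize the integrand pointwise in $g$. For $a = G_{x^\alpha}(g) \ge 0$ and $b = G_{x^\beta}(g) \ge 0$ not both zero, the scalar function $y \mapsto a\log y + b\log(1-y)$ on $(0,1)$ is strictly concave (its second derivative $-a/y^2 - b/(1-y)^2$ is negative) and its derivative $a/y - b/(1-y)$ vanishes only at $y = a/(a+b)$, which is therefore its unique maximizer. Since the maximization over $D_1$ decouples across $g$ — $D_1$ may independently take any value in $[0,1]$ at each point — this pointwise optimizer is also the maximizer of $V(G,\cdot)$, which yields exactly eq.~\ref{eq:op_D}.

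Rather than a genuine obstacle, the write-up needs only the two caveats already noted in~\cite{goodfellow2014generative}. First, $D_1$ need only be defined on $\mathrm{Supp}(G_{x^\alpha}) \cup \mathrm{Supp}(G_{x^\beta})$, since its values outside this set do not affect $V(G,D_1)$; hence the optimal discriminator is pinned down only up to its behaviour off the supports. Second, the change of variables implicitly assumes the weighted feature laws admit densities with respect to a common reference measure, which I would either assume outright or phrase in the nonatomic sense used in the original GAN argument. I expect the resulting proof to be short — essentially the GAN computation transported through the map $f \mapsto W \odot f$ — with no step presenting real difficulty.
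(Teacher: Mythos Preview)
Your proposal is correct and follows essentially the same approach as the paper's own proof: rewrite the expectations in eq.~\ref{eq:mar_adv_weight} as integrals over the weighted feature $W\odot f$, then maximize the integrand pointwise via the scalar identity that $y\mapsto a\log y + b\log(1-y)$ attains its maximum at $a/(a+b)$. Your write-up is in fact slightly more careful than the paper's (explicit concavity check, support and density caveats), but the argument is the same.
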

		
		\begin{proof}	
			For eq. \ref{eq:mar_adv_weight}, given any fixed generator $G$, the discriminator $D_1$ is trained to maximize the value function $V(G,D_1)$:
			\begin{equation}
			\begin{aligned}
			\min\limits_G \max\limits_{D_1} & V(G,D_1) \\
			= & \int_{x^\alpha} p_\alpha (x^\alpha)[logD_1 (W\odot G^{f}(x^\alpha))]d_{x^\alpha} \\
			+ & \int_{x^\beta} p_\beta (x^\beta)[log(1-D_1 (W\odot G^{f}(x^\beta)))]d_{x^\beta} \\
			= & \int_{W\odot f} G_{x^\alpha} (W\odot f)[logD_1 (W\odot f)] \\
			+ & G_{x^\beta} (W\odot f)[log(1-D_1 (W\odot f))]d_{W\odot f} \\
			\end{aligned}
			\label{eq:optimal_D}
			\end{equation}
        where $G_{x^\alpha}(W\odot f) = (W\odot f)_{x^\alpha \sim p_\alpha (x)}$ and $G_{x^\beta}(W\odot f) = (W\odot f)_{x^\beta \sim p_\beta (x)}$. Eq.~\ref{eq:optimal_D} has the same form as function $y \rightarrow a\, log(y) + b\,log(1 - y), (a,b) \in \mathbb{R}^2 \setminus \{0,0\}$, which achieves its maximum at $\frac{a}{a + b} \in [0,1]$. So similarly, given $G$ fixed, the optimal $D$ that makes $V(G,D_1)$ achieve its maximum can be obtained as in eq.~\ref{eq:op_D}. 
		\end{proof}
		Then, by substituting eq. \ref{eq:op_D} into eq. \ref{eq:mar_adv_weight}, the training criterion for $G$ is to minimize
		
		\begin{equation}
		\begin{aligned}
		& V(G,D^{*}_{G}) \\
		& = \mathbb{E}_{x^{\alpha}\sim p_{\alpha}(x)}[logD^{*}_{G}(W\odot G^{f}(x^{\alpha}))] \\
		& + \mathbb{E}_{x^{\beta}\sim p_{\beta}(x)}[log(1-D^{*}_{G}(W\odot G^{f}(x^{t})))] \\
		& = \mathbb{E}_{W\odot f \sim G_{x^\alpha}(W\odot f)}[logD^{*}_{G}(W\odot f)] \\
		& + \mathbb{E}_{W\odot f \sim G_{x^\beta}(W\odot f)}[log(1-D^{*}_{G}(W\odot f))] \\
		& = \mathbb{E}_{W\odot f \sim G_{x^\alpha}(W\odot f)}[log\frac{G_{x^\alpha}(W\odot f)}{G_{x^\alpha}(W\odot f) + G_{x^\beta}(W\odot f)}] \\
		& + \mathbb{E}_{W\odot f \sim G_{x^\beta}(W\odot f)}[log\frac{G_{x^\beta}(W\odot f)}{G_{x^\alpha}(W\odot f) + G_{x^\beta}(W\odot f)}] \\
		\end{aligned}
		\label{eq:min_G}
		\end{equation}
		According to~\cite{goodfellow2014generative}, it is straightforward to induce that eq.~\ref{eq:min_G} can be reformulated to
		\begin{equation}
		V(G,D^{*}_{G}) = -log(4) + 2\cdot JSD(G_{x^\alpha}(W\odot f)\parallel G_{x^\beta}(W\odot f))
		\end{equation}
		We can see that when $G_{x^\alpha}(W\odot f) = G_{x^\beta}(W\odot f)$, the global minimum can be achieved as the Jensen-Shannon divergence (JSD) between two distributions is always non-negative and equals to zero iff they are exactly the same. To sum up, in the element-wise weighted adversarial architecture, the deep neural network $G$ tends to generate:
		\begin{equation}
		\begin{aligned}
		G_{x^\alpha}(W\odot f) = G_{x^\beta}(W\odot f).
		\end{aligned}
		\label{eq:distribution}
		\end{equation} 
		
		\section{Appendix B. Complete Experimental Results on Office-Home}
		Here we display the complete experimental results for Office-Home in Table \ref{tab:officehome_add}, including all traditional tasks. We could observe performance decline of traditional domain adaptation methods from Table \ref{tab:officehome_add}. For example, CDAN and DANN perform worse on task \{Cl,Rw\}$\rightarrow$Ar than that on task Cl$\rightarrow$Ar and Rw$\rightarrow$Ar. Also, CDAN and DANN perform worse on task \{Cl,Pr\}$\rightarrow$Ar than that on task Cl$\rightarrow$Ar and Pr$\rightarrow$Ar. For task \{Pr,Cl\}$\rightarrow$Rw, CDAN perform worse than task Pr$\rightarrow$Rw and Cl$\rightarrow$Rw. More similar cases could be observed in Table \ref{tab:officehome_add}.
			\begin{table*}[t]
			\setlength{\belowcaptionskip}{0.5cm}
			\centering
			\caption[LoF entry]{Accuracy on Office-Home under standard settings of Mixed set domain adaptation.}
			\renewcommand\arraystretch{1.0}
			\small
			\begin{tabular}{p{2.3cm}<{\centering}|p{0.8cm}<{\centering}|p{1.5cm}<{\centering}p{1.5cm}<{\centering}p{1.5cm}<{\centering}p{1.5cm}<{\centering}p{1.5cm}<{\centering}p{1.6cm}<{\centering}|p{0.8cm}<{\centering}}
				\hline
				Method & MSDA & Ar$\rightarrow$Cl & Pr$\rightarrow$Cl & Ar$\rightarrow$Pr & Cl$\rightarrow$Ar & Cl$\rightarrow$Pr &  Cl$\rightarrow$Rw & Avg. \\
				\hline
				ResNet50 & \ding{55} & $45.1$ & $41.4$ & $64.9$ & $54.3$ & $60.4$ & $62.8$ & $54.8$ \\
				DANN & \ding{55} & $47.3$ & $45.8$ & $64.6$ & $53.7$ & $62.1$ & $64.1$ & $56.3$  \\			
				CDAN & \ding{55} & $49.0$ & $48.3$ & $69.3$ & $54.4$ & $66.0$ & $68.4$ & $59.2$ \\
				FEW & \ding{55} & $49.1$ & $44.0$ & $64.6$ & $51.2$ & $61.8$ & $60.4$ & $55.2$ \\
				\hline		
				Method & MSDA & \{Ar,Cl\}$\rightarrow$Pr & \{Ar,Pr\}$\rightarrow$Cl & \{Cl,Pr\}$\rightarrow$Ar & \{Cl,Rw\}$\rightarrow$Ar & \{Rw,Cl\}$\rightarrow$Ar & \{Pr,Cl\}$\rightarrow$Rw & Avg. \\
				\hline
				ResNet50 & \ding{51} & $61.0\pm0.7$ & $42.5\pm1.0$ & $51.1\pm0.6$ & $54.6\pm0.2$ & $59.2\pm0.4$ & $63.9\pm0.3$ & $51.7$ \\
				DANN & \ding{51} & $60.3\pm0.4$ & $45.6\pm0.2$ & $47.3\pm0.4$ & $55.8\pm0.7$ & $57.2\pm0.5$ & $65.8\pm0.5$ & $55.3$  \\			
				CDAN & \ding{51} & $59.7\pm0.9$ & $45.9\pm0.5$ & $52.0\pm0.5$ & $56.4\pm0.2$ & $\bm{63.9\pm0.3}$ & $67.8\pm0.6$ & $57.6$ \\
				\hline
				CDAN+DANN & \ding{51} & $62.4\pm0.7$ & $\bm{47.1\pm0.2}$ & $\bm{52.4\pm0.4}$ & $56.3\pm0.6$ & $62.2\pm0.3$ & $67.7\pm0.3$ & $58.0$ \\
				FEW & \ding{51} & $\bm{62.7\pm0.7}$ & $\bm{47.1\pm0.8}$ & $\bm{52.4\pm0.6}$ & $\bm{56.8\pm0.1}$ & $62.8\pm0.3$ & $\bm{68.3\pm0.2}$ & $\bm{58.4}$ \\
				\hline
				\multicolumn{9}{c}{} \\
				\multicolumn{9}{c}{} \\
				\hline
				Method & MSDA & Ar$\rightarrow$Rw & Pr$\rightarrow$Ar & Pr$\rightarrow$Rw & Rw$\rightarrow$Ar & Rw$\rightarrow$Cl &  Rw$\rightarrow$Pr & Avg. \\
				\hline
				ResNet50 & \ding{55} & $71.8$ & $49.7$ & $71.5$ & $66.7$ & $51.0$ & $78.2$ & $64.8$ \\
				DANN & \ding{55} & $70.2$ & $52.7$ & $68.7$ & $63.9$ & $54.0$ & $79.4$ & $64.8$  \\			
				CDAN & \ding{55} & $74.5$ & $55.6$ & $75.9$ & $68.4$ & $55.4$ & $80.5$ & $68.3$ \\
				FEW & \ding{55} & $70.8$ & $50.2$ & $70.8$ & $67.3$ & $49.9$ & $77.5$ & $64.4$ \\
				\hline		
				\hline
				Method & MSDA & \{Pr,Ar\}$\rightarrow$Cl & \{Rw,Pr\}$\rightarrow$Ar & \{Ar,Rw\}$\rightarrow$Cl & \{Cl,Rw\}$\rightarrow$Pr & \{Pr,Rw\}$\rightarrow$Ar & \{Rw,Ar\}$\rightarrow$Cl & Avg. \\
				\hline			
				CDAN & \ding{51} & $46.0\pm0.7$ & $\bm{62.3\pm0.3}$ & $47.6\pm0.5$ & $66.4\pm1.3$ & $58.2\pm0.7$ & $51.4\pm0.5$ & $55.3$ \\
				\hline
				CDAN+DANN & \ding{51} & $\bm{47.1\pm0.8}$ & $61.8\pm0.6$ & $\bm{48.2\pm0.3}$ & $65.8\pm0.3$ & $57.5\pm0.6$ & $48.7\pm0.8$ & $54.9$ \\
				FEW & \ding{51} & $\bm{47.1\pm0.6}$ & $62.0\pm0.3$ & $\bm{48.2\pm0.3}$ & $\bm{67.1\pm0.6}$ & $\bm{58.5\pm0.2}$ & $\bm{51.5\pm0.3}$ & $\bm{55.7}$ \\
				\hline
			\end{tabular}
			\label{tab:officehome_add}
		\end{table*}
		
		\section{Appendix C. More Experiments under Different Settings}
		In this section, we report experimental results for more tasks under different category splitting strategies in Table \ref{tab:officehome_diff}. For Office-31, class $1\sim5$ are from domain $\alpha$ while class $6\sim31$ are from domain $\beta$. For Office-Home, class $1\sim10$ are from domain $\alpha$ while class $11\sim65$ are from domain $\beta$. Task \{D,A\}$\rightarrow$W, \{W,A\}$\rightarrow$D, \{Cl,Rw\}$\rightarrow$Ar, and \{Cl,Rw\}$\rightarrow$Pr are reported in this section. We can observe that the proposed ``FEW'' can outperform CDAN and CDAN+DANN which reveals the effectiveness of its effectiveness.
		\begin{table*}[t]
			\centering
			\caption[LoF entry]{Accuracy of more tasks under different category splitting strategies of MSDA.}
			\small
			\begin{tabular}{p{2.0cm}<{\centering}p{1.5cm}<{\centering}p{1.5cm}<{\centering}p{1.0cm}<{\centering}|p{1.6cm}<{\centering}p{1.6cm}<{\centering}p{1.6cm}<{\centering}}
				\hline
				\multirow{2}{*}{Method} &
				\multicolumn{3}{c|}{ Office-31} & \multicolumn{3}{c}{ Office-Home }  \cr 
				& \{D,A\}$\rightarrow$W & \{W,A\}$\rightarrow$D & Avg. & \{Cl,Rw\}$\rightarrow$Pr &  \{Cl,Rw\}$\rightarrow$Ar & Avg. \\
				\hline
				CDAN & $90.3\pm 0.3$ & $88.0\pm 0.9$ & $89.2$ & $75.0\pm 0.3$ & $\bm{62.5\pm0.3}$ & $68.8$  \\
				CDAN+DANN & $\bm{90.9\pm 0.6}$ & $89.3\pm0.4$ & $90.1$ & $75.3\pm 0.3$ & $62.1\pm 0.1$ & $68.7$ \\
				FEW & $\bm{90.9\pm 0.3}$ & $\bm{89.6\pm0.5}$ & $\bm{90.3}$ & $\bm{75.5\pm 0.2}$ & $62.2\pm 0.1$ & $\bm{68.9}$ \\
				\hline
			\end{tabular}
			\label{tab:officehome_diff}
		\end{table*}
	
		\begin{figure}[t]
		\centering
		\subfigure[\{W,A\}$\rightarrow$D]{
			\includegraphics[width=0.6\linewidth]{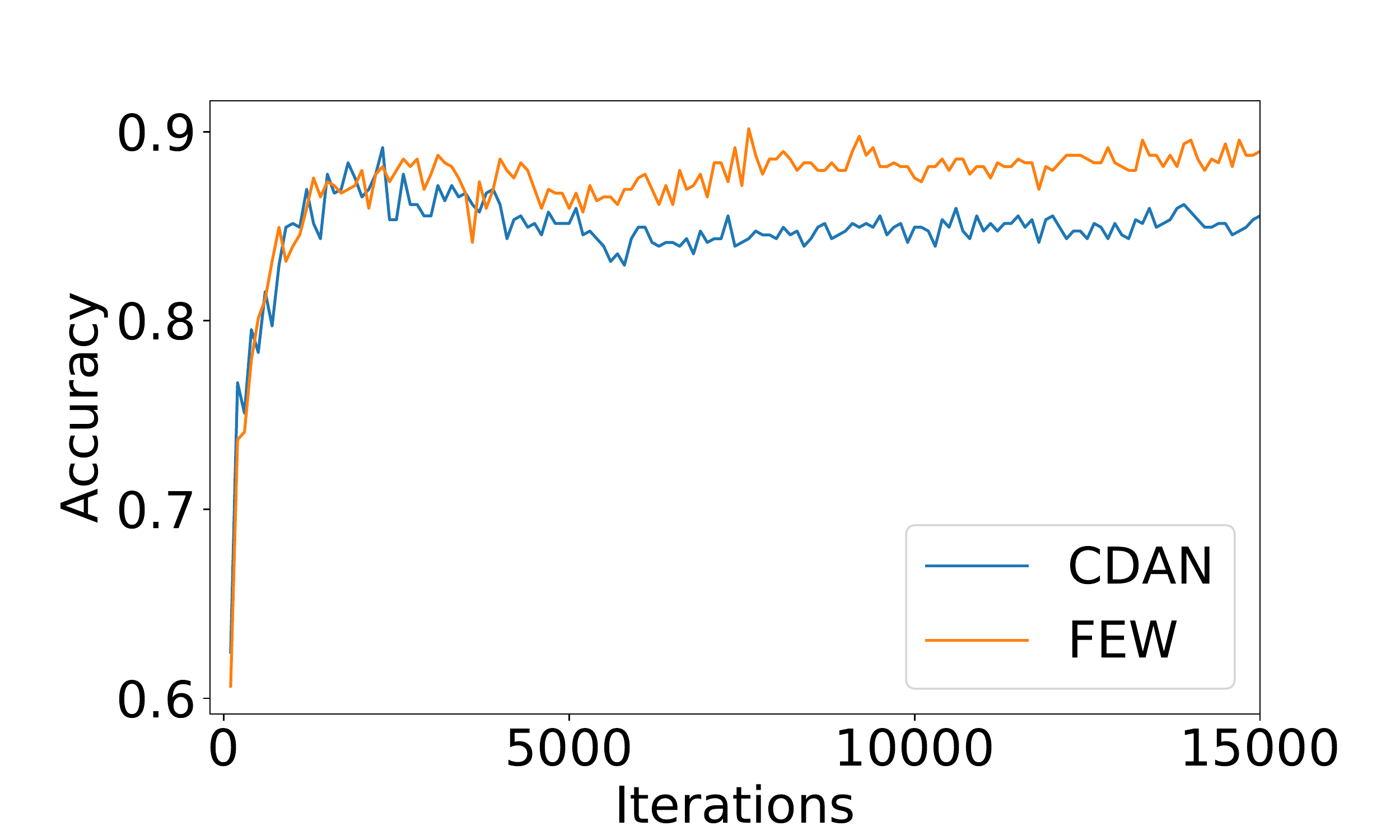}
		} \\
		\subfigure[\{Cl,Rw\}$\rightarrow$Ar]{
			\includegraphics[width=0.6\linewidth]{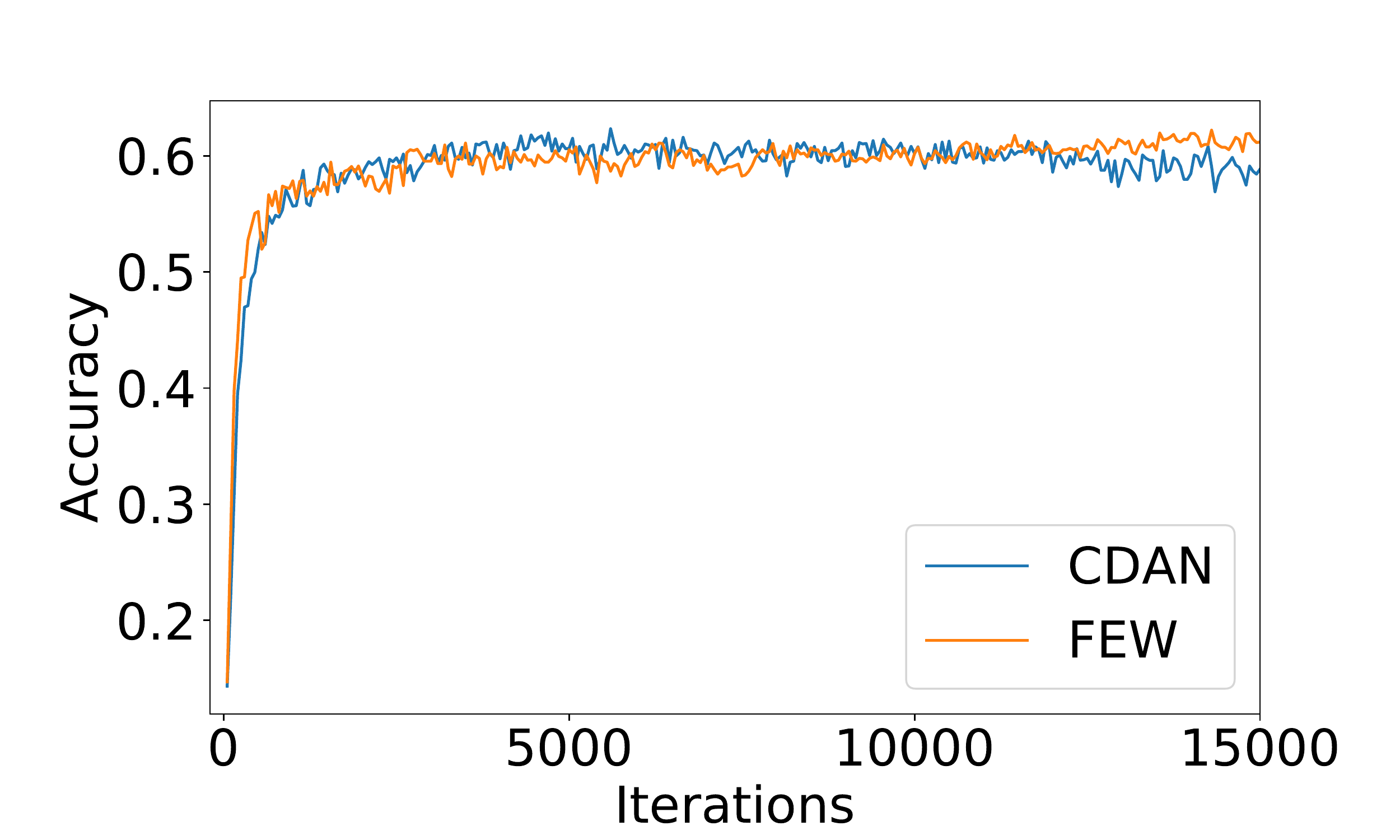}
		}
		\caption{Accuracy curves of task \{W,A\}$\rightarrow$D and \{Cl,Rw\}$\rightarrow$Ar under different splitting strategies of data set Office-31 and Office-Home respectively. }\label{fig:acc_diff}
    	\end{figure}
		In addition, here we plot the accuracy curves for task \{W,A\}$\rightarrow$D and \{Cl,Rw\}$\rightarrow$Ar under different splitting strategies of the source domain in Figure \ref{fig:acc_diff}. From Figure \ref{fig:acc_diff}, we can observe that the proposed ``FEW'' method are more stable than traditional domain adaptation method ``CDAN' when dealing with MSDA problems. For task \{W,A\}$\rightarrow$D, FEW can achieve higher accuracy and its accuracy keeps rising steadily while that of CDAN declines after a number of iterations. As for task \{Cl,Rw\}$\rightarrow$Ar, the best accuracy of CDAN are higer than that of FEW. However, we can observe that FEW converges to a higher accuracy than CDAN, which indicates that the proposed FEW is more effective.
		
		\section{Appendix D. Heat Map}
		Here we plot ``heat maps'' of more examples in Figure \ref{fig:heat_maps_appendix} in order to show the effect of our proposed method. ``Grad-cam'' method is applied to models trained on CDAN (left), CDAN+DANN (middle) and the proposed method FEW (right) respectively for comparison. Examples are selected from 5 tasks: \{W,D\}$\rightarrow$A, \{A,D\}$\rightarrow$W, \{Ar,Pr\}$\rightarrow$Cl, \{Cl,Pr\}$\rightarrow$Ar, and \{Rw,Cl\}$\rightarrow$Ar, from the first row to the last respectively. For each task, 3 images are selected to plot. From Figure \ref{fig:heat_maps_appendix}, we can observe that models trained by using FEW can focus more on the object than CDAN and CDAN+DANN for most cases.
		\begin{figure*}
			\centering
			\includegraphics[width=\linewidth]{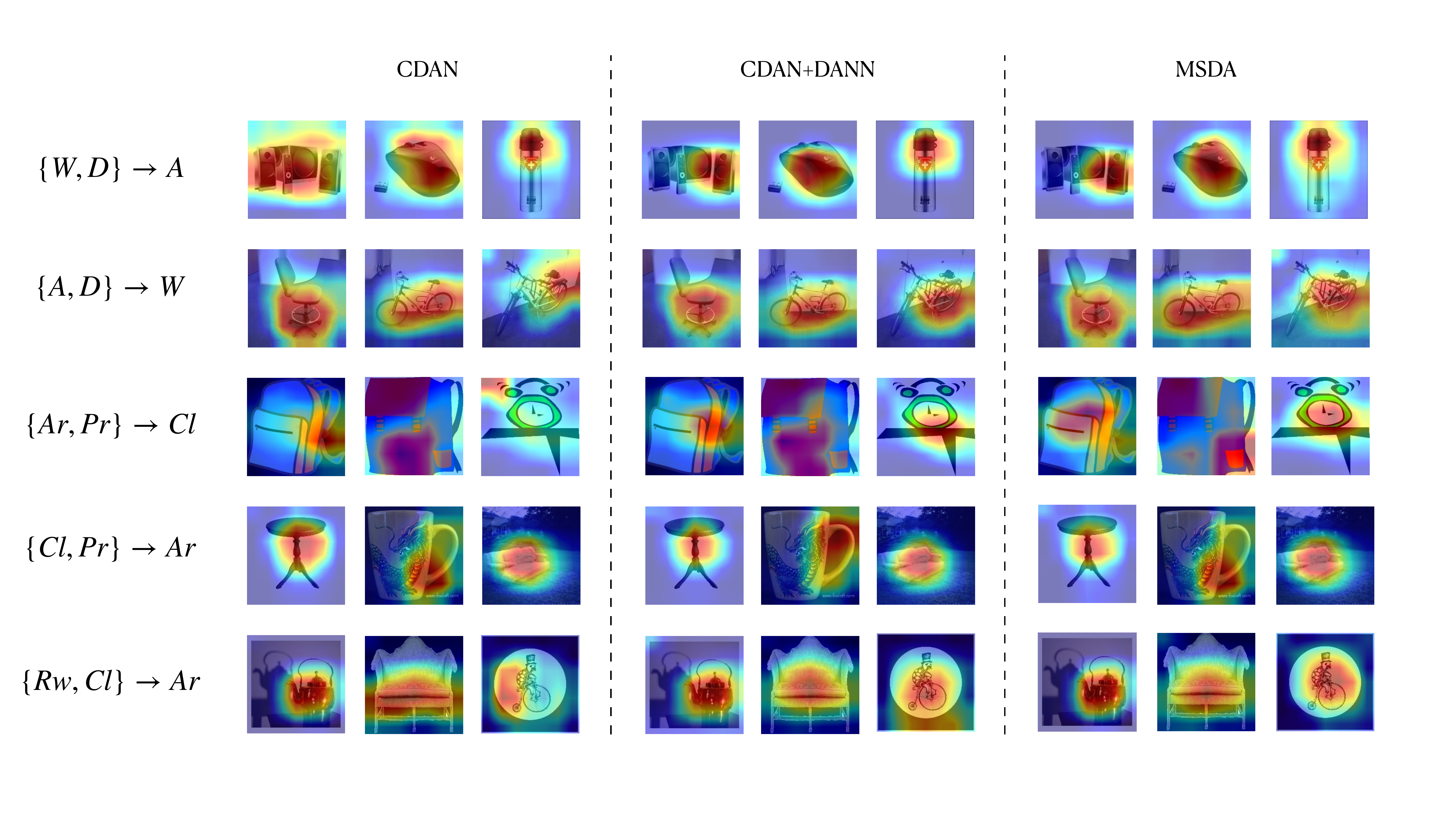}
			\caption{Heat maps. Plotted by using Grad-Cam \cite{selvaraju2017grad}.}\label{fig:heat_maps_appendix}
		\end{figure*}
	\end{appendices}
\end{document}